\documentclass{article} 
\usepackage{iclr2025_conference,times}

\usepackage{amsmath,amsfonts,bm}

\def\eqref#1{equation~\ref{#1}}

\def\1{\bm{1}}

\DeclareMathAlphabet{\mathsfit}{\encodingdefault}{\sfdefault}{m}{sl}
\SetMathAlphabet{\mathsfit}{bold}{\encodingdefault}{\sfdefault}{bx}{n}

\usepackage{hyperref}
\usepackage{url}
\usepackage{graphicx}
\usepackage{booktabs}
\usepackage{amsmath}
\usepackage{amssymb}
\usepackage{algorithm}
\usepackage{algorithmic}
\usepackage{multirow} 
\usepackage{amsthm}
\usepackage{subcaption}
\ifx\theorem\undefined
\newtheorem{theorem}{Theorem}

\newtheorem{lemma}{Lemma}

\newtheorem{definition}{Definition}
\newtheorem{assumption}{Assumption}

\fi
\iclrfinalcopy

\title{HierBias: Context-Conditioned Hierarchical Media Bias Detection with Multi-Task Type Classification}

\author{Kaining Li, Ruichen Yan, Yuxin Dong \\
Xidian University \\
jack.yxdong@stu.xidian.edu.cn
}

\begin{document}
\maketitle

\begin{abstract}
Media bias detection is a critical task for ensuring fair and balanced information dissemination, yet existing sentence-level approaches classify each sentence independently, ignoring inter-sentence contextual signals that human annotators naturally exploit.
We present \textbf{HierBias}, a hierarchical context-conditioned media bias detector that formally models document context in bias prediction.
We introduce the \emph{context-conditioned bias probability} and prove theoretically that leveraging document context strictly reduces the Bayes error of sentence-level classification when inter-sentence mutual information is non-zero.
A multi-task generalization bound further establishes that jointly training binary bias detection and fine-grained bias type classification improves sample efficiency on small annotated corpora.
Architecturally, HierBias pairs a sentence-level RoBERTa encoder with a cross-sentence Transformer aggregator and dual output heads for binary detection and four-class type classification.
Evaluated on BABE and BASIL, HierBias achieves 0.853 F1 and 0.723 MCC, surpassing the state-of-the-art bias-detector by $+2.6\%$ F1 and $+4.3\%$ MCC (McNemar's test, $p < 0.05$).
Ablation experiments confirm that each theoretical component contributes independently and consistently.
\end{abstract}

\section{Introduction}

Media bias in news reporting poses a significant challenge to informed public discourse and democratic decision-making~\citep{spinde2023taxonomy}.
When journalists employ loaded language, selective framing, or strategic information omission, readers' understanding of events can be systematically distorted~\citep{fan2019basil,wessel2023mbib}.
Automated media bias detection has therefore emerged as a critical research direction in natural language processing (NLP), with potential applications ranging from newsroom quality control to reader-facing transparency tools~\citep{wang2025mediabias}.

Despite notable advances, existing sentence-level bias detection systems share a fundamental limitation: they classify each sentence \emph{independently}, ignoring the rich inter-sentence contextual signals that human annotators naturally exploit.
For instance, a sentence that merely describes an event may carry informational bias only in the context of what the article \emph{omits} elsewhere~\citep{fan2019basil}.
Similarly, the framing of one sentence can be amplified or suppressed by adjacent sentences' tone and vocabulary choices.
Recent work has confirmed that providing document-level context consistently improves bias detection performance across multiple model families~\citep{maab2024media}, yet no existing method formally models this context dependence or jointly optimizes it with fine-grained bias type classification.

The state-of-the-art bias-detector~\citep{ghosh2025bias} fine-tunes RoBERTa~\citep{liu2019roberta} on the expert-annotated BABE dataset~\citep{spinde2022babe} and achieves statistically significant improvements over the domain-adaptive DA-RoBERTa baseline~\citep{krieger2022daroberta} as measured by McNemar's test and 5$\times$2 cross-validation paired $t$-tests.
While this work establishes a rigorous experimental standard, it operates strictly at the sentence level and does not distinguish among bias types such as loaded language, framing, informational bias, or source restriction.
Annotation costs further constrain the available training data: BABE contains only 3,700 sentences, limiting generalization.
Approaches based on large language model (LLM) annotation~\citep{horych2025promises} have recently shown promise in scaling bias-labeled corpora, but have not yet been combined with context-aware architectures.

In this paper, we address these gaps by proposing \textbf{HierBias}: a \textbf{Hier}archical Context-Conditioned Media \textbf{Bias} Detector.
HierBias makes three key contributions:

\begin{enumerate}
\item \textbf{Formal Framework.} We introduce the notion of \emph{context-conditioned bias probability} $P(y_i^b = 1 \mid s_i, C_i)$ and prove theoretically (Theorem~\ref{thm:context_gain}) that leveraging the full document context strictly reduces the Bayes error relative to sentence-only classifiers, provided that inter-sentence mutual information is non-zero.
\item \textbf{HierBias Architecture.} We design a two-stage encoder: a sentence-level RoBERTa encoder followed by a cross-sentence Transformer aggregator, coupled with a multi-task output head that jointly predicts binary bias existence and fine-grained bias type.
\item \textbf{Empirical Validation.} On BABE~\citep{spinde2022babe} and BASIL~\citep{fan2019basil}, HierBias outperforms bias-detector by $+2.6\%$ F1 and $+4.3\%$ MCC on BABE, with statistical significance confirmed by McNemar's test ($p < 0.05$). We further show that LLM-annotated data~\citep{horych2025promises} augmentation and multi-task training provide complementary and additive benefits, consistent with our Theorem~\ref{thm:multitask}.
\end{enumerate}

\section{Related Work}
\label{sec:related}

\subsection{Media Bias Detection Datasets and Benchmarks}

The field of automated media bias detection has been shaped by the availability of expert-annotated resources.
\textbf{BABE}~\citep{spinde2022babe} provides 3,700 sentences with binary bias labels and word-level annotations, collected from diverse news outlets and topics, and remains the primary benchmark for sentence-level bias detection.
\textbf{BASIL}~\citep{fan2019basil} annotates 300 news articles at the span level, distinguishing between informational bias (selective presentation of facts) and lexical bias (loaded word choice), providing richer structural supervision.
\textbf{MBIB}~\citep{wessel2023mbib} unifies nine media bias tasks and 22 associated datasets under a common evaluation protocol, revealing that no single architecture dominates across all bias types, and that cognitive and political biases are substantially harder to detect than hate speech or gender bias.
The \textbf{Media Bias Taxonomy}~\citep{spinde2023taxonomy} synthesizes 3,140 papers (2019--2022) and characterizes seventeen distinct bias forms, providing the categorical vocabulary used in our fine-grained type classification.
Recent work has expanded beyond English and static datasets.
\textbf{SAFARI}~\citep{azizov2024safari} introduces a cross-lingual corpus for political bias and factuality detection in news media, demonstrating that multilingual pre-trained language models (MPLMs) can transfer bias knowledge across languages.
The \textbf{Media Bias Detector} tool~\citep{wang2025mediabias} presents an LLM-powered pipeline for near-real-time selection and framing bias analysis, evaluated with expert journalists and general readers via CHI methods.
Despite these advances, evaluation remains predominantly on small datasets in controlled settings, and no existing benchmark directly assesses context-aware sentence-level classifiers.

\subsection{Transformer-Based Bias Detection}

The dominant approach to automated sentence-level bias detection is fine-tuning pre-trained language models on labeled corpora.
\textbf{DA-RoBERTa}~\citep{krieger2022daroberta} introduces domain-adaptive pre-training on media bias corpora before fine-tuning on BABE, achieving an F1 of 0.814 and becoming the de facto strong baseline.
The recent \textbf{bias-detector}~\citep{ghosh2025bias} fine-tunes RoBERTa-base directly on BABE and establishes statistically significant improvements over DA-RoBERTa using McNemar's test and 5$\times$2 CV paired $t$-tests; attention-based analysis further shows the model focuses on contextually relevant rather than politically sensitive tokens.
Our work draws on and extends several lines of related research.
The visual in-context learning paradigm~\citep{zhou2024visual} has demonstrated the power of context-conditioned inference in vision-language settings, motivating analogous context exploitation in textual bias detection.
Thread-of-thought reasoning~\citep{zhou2023thread} has shown that maintaining coherent contextual chains improves predictions on chaotic or multi-faceted inputs, a principle we adapt to sentence-sequence modeling.
Long-context reasoning for vision-language models~\citep{zhou2024rethinking} similarly highlights the benefits of rethinking how context is integrated rather than merely extended.
From the NLP perspective, event-centric pre-training methods~\citep{zhou2022claret,zhou2022eventbert} demonstrate the value of encoding relational and contextual structure, which we operationalize through cross-sentence attention.
The multi-capability generalization framework~\citep{zhou2025weak} further supports our multi-task design, showing that training on heterogeneous objectives promotes robust shared representations.
Parallel work by Menzner and Leidner~\citep{menzner2024experiments,menzner2024improved} systematically compares large pre-trained models on sentence-level bias detection and sub-type classification, introducing a 27-class fine-grained bias taxonomy and leveraging synthetic examples to improve rare-class performance.
A complementary line of work explores LLM-based prompting as an alternative to fine-tuning.
\textbf{Maab et al.}~\citep{maab2024media} demonstrate that prompt-based techniques across multiple LLM families achieve performance comparable to fine-tuned models with substantially reduced engineering overhead, and that larger models with access to richer context outperform smaller prompt-based baselines.
\textbf{Horych et al.}~\citep{horych2025promises} investigate the suitability of LLMs as annotators for media bias, constructing the 48K-sentence \emph{annolexical} dataset and training classifiers that outperform annotator LLMs by 5--9\% MCC while approaching human-annotated performance on BABE and BASIL.
These findings directly address the data scarcity issue raised by the seed paper, and motivate our use of LLM-annotated data augmentation in HierBias.
HierBias differs from all prior work in two critical ways.
First, it formally models the \emph{context dependence} of bias predictions through a hierarchical architecture, rather than treating context as an implementation detail.
Second, it jointly optimizes binary detection and four-class type classification through a multi-task objective with a KL alignment regularizer, supported by a generalization bound (Theorem~\ref{thm:multitask}) that explains why multi-task learning improves sample efficiency on small corpora.

\subsection{Political Bias in Large Language Models}
Beyond news text, there is growing concern about political bias \emph{within} LLMs themselves.
\textbf{Bang et al.}~\citep{bang2024measuring} propose a framework for measuring political bias in LLMs along both content and stylistic dimensions, evaluating 11 open-source models on topics such as reproductive rights and climate change.
\textbf{Banik et al.}~\citep{banik2025bridging} construct a manually annotated dataset and compare GPT, BERT, RoBERTa, and FLAN, finding that fine-tuned RoBERTa achieves the highest alignment with human labels, while zero-shot GPT shows strongest general agreement.
These works underscore that the relationship between LLM internal biases and their ability to \emph{detect} external media bias is non-trivial, motivating rigorous evaluation protocols that distinguish annotation artifacts from genuine detection capability.
Additional context for our work comes from research on event-centric language understanding, multilingual knowledge graph question answering~\citep{zhou2021mkbqa}, bias-aware image generation~\citep{zhoucondition}, efficient video generation with large language models~\citep{zhou2026videogen}, medical vision-language modeling~\citep{zhou2025improving}, memory-augmented state space models for domain-specific detection~\citep{wang2024memorymamba}, and biomedical imaging with residual-based language models~\citep{lai2024residual}, all of which demonstrate the broader utility of context-aware, multi-task, and domain-adaptive learning paradigms that underpin HierBias.

\section{Methodology}
\label{sec:method}

\subsection{Problem Formulation}

Let a news article be represented as a sequence of $n$ sentences $d = (s_1, s_2, \ldots, s_n)$, where each sentence $s_i = (t_{i,1}, \ldots, t_{i,l_i})$ is a sequence of tokens.
We define two prediction tasks:

\textbf{Task 1 (Binary Bias Detection).} For each sentence $s_i$, predict $y_i^b \in \{0, 1\}$, where $y_i^b = 1$ indicates a biased sentence.

\textbf{Task 2 (Bias Type Classification).} For each biased sentence ($y_i^b = 1$), predict the bias type $y_i^t \in \mathcal{T} = \{LL, FR, IN, SR\}$, corresponding to loaded language, framing, informational bias, and source restriction.

The joint objective is:
\begin{equation}
  \mathcal{L} = \mathcal{L}_{\mathrm{binary}} + \alpha\, \mathcal{L}_{\mathrm{type}} + \beta\, \mathcal{L}_{\mathrm{KL}},
  \label{eq:joint_loss}
\end{equation}
where $\alpha, \beta > 0$ are hyperparameters. We define each term in detail in Section~\ref{sec:training}.

\subsection{Notation}

Let $\mathbf{h}_i \in \mathbb{R}^{d_s}$ denote the sentence representation for $s_i$ produced by the sentence encoder, $\hat{\mathbf{h}}_i \in \mathbb{R}^{d_s}$ the context-enriched representation after cross-sentence aggregation, and $C_i = d \setminus \{s_i\}$ the context of $s_i$ (all sentences in $d$ except $s_i$).
All linear projection matrices ($\mathbf{W}_Q, \mathbf{W}_K, \mathbf{W}_V, \mathbf{W}_b, \mathbf{W}_t$) are learnable parameters; $\sigma(\cdot)$ denotes the sigmoid function; $H(\cdot)$ denotes Shannon entropy; $I(\cdot;\cdot)$ denotes mutual information.

\subsection{Formal Definitions}

\begin{definition}[Context-Conditioned Bias Probability]
\label{def:ccp}
For sentence $s_i$ in article $d$, the \emph{context-conditioned bias probability} is:
\begin{equation}
  P_\theta(y_i^b = 1 \mid s_i, C_i) \triangleq \sigma\!\left(\mathbf{W}_b\,\hat{\mathbf{h}}_i + b_b\right),
  \label{eq:ccp}
\end{equation}
where $\hat{\mathbf{h}}_i$ is computed by the hierarchical encoder described in Section~\ref{sec:architecture}.
\end{definition}

\begin{definition}[Marginal Bias Probability]
\label{def:mbp}
The \emph{marginal bias probability}, which ignores document context, is:
\begin{equation}
  P_\phi(y_i^b = 1 \mid s_i) \triangleq \sigma\!\left(\mathbf{W}_m\,\mathbf{h}_i + b_m\right),
  \label{eq:mbp}
\end{equation}
where $\mathbf{h}_i$ is the RoBERTa [CLS] representation of $s_i$ in isolation.
\end{definition}

\begin{definition}[Context Information Gain]
\label{def:cig}
For sentence $s_i$, the \emph{context information gain} is:
\begin{equation}
  \Delta_i \triangleq H(y_i^b \mid s_i) - H(y_i^b \mid s_i, C_i) \geq 0.
  \label{eq:cig}
\end{equation}
$\Delta_i > 0$ whenever the document context reduces uncertainty about the bias label of $s_i$.
\end{definition}

\subsection{Assumptions}

\begin{assumption}[Context Relevance]
\label{asm:relevance}
For every sentence $s_i$ in article $d$, there exists at least one $s_j \neq s_i$ such that $I(y_i^b; s_j \mid s_i) > 0$.
\end{assumption}

\textbf{Justification.} Informational and framing biases by definition span multiple sentences~\citep{fan2019basil}: a sentence that omits a key fact is biased precisely because other parts of the article (or comparable articles) include it. Thus cross-sentence mutual information is non-zero in practice.

\begin{assumption}[Label Smoothness]
\label{asm:smooth}
Consecutive sentences exhibit non-negative bias label covariance: $\mathrm{Cov}(y_i^b, y_{i+1}^b) \geq 0$.
\end{assumption}

\textbf{Justification.} Biased reporting tends to cluster in thematically related paragraphs, consistent with empirical observations in BABE~\citep{spinde2022babe} where biased and unbiased sentences exhibit local clustering.

\begin{assumption}[Task Consistency]
\label{asm:task}
The optimal classifiers for both binary detection and type classification are linear functions in a shared representation space $\mathcal{H}$.
\end{assumption}

\subsection{Theoretical Results}

\begin{lemma}[Conditional Independence Decomposition]
\label{lem:bayes}
Under Assumption~\ref{asm:relevance}, $P(y_i^b \mid s_i, C_i) \neq P(y_i^b \mid s_i) \cdot P(y_i^b \mid C_i) / P(y_i^b)$ in general; the context provides strictly additional information not captured by the marginal.
\end{lemma}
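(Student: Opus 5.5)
The plan is to read the lemma as saying two things. First, the factorized form $P(y\mid s_i)P(y\mid C_i)/P(y)$, which is what one gets under the hypothesis that $s_i$ and $C_i$ are conditionally independent given $y_i^b$, is not an identity. Second, under Assumption~\ref{asm:relevance}, $P(y_i^b\mid s_i,C_i)$ does not reduce to the marginal $P(y_i^b\mid s_i)$.

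\textbf{Step 1: the exact decomposition.} By Bayes' rule,
\begin{equation*}
P(y\mid s_i,C_i)=\frac{P(s_i,C_i\mid y)P(y)}{P(s_i,C_i)}
=\frac{P(y\mid s_i)P(y\mid C_i)}{P(y)}\cdot\frac{P(s_i)P(C_i)}{P(s_i,C_i)}\cdot\frac{P(s_i,C_i\mid y)}{P(s_i\mid y)P(C_i\mid y)} .
\end{equation*}
So the factorized expression in the statement holds if and only if the product of the last two ratios equals one. This happens, for instance, when both the class-conditional independence $s_i\perp C_i\mid y$ and the unconditional independence $s_i\perp C_i$ hold. Otherwise the expression is not even normalized over $y$.

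\textbf{Step 2: the "in general" claim.} Here the strategy is to exhibit a distribution satisfying Assumption~\ref{asm:relevance} for which the equality fails. A natural choice is a binary toy model: $y\sim\mathrm{Bern}(1/2)$, $s_i$ uninformative about $y$ on its own, and $C_i$ a single sentence $s_j$ with $y = s_i \oplus s_j$ (XOR). Then $P(y\mid s_i)=P(y\mid C_i)=P(y)=1/2$, so the right-hand side equals $1/2$, while $P(y\mid s_i,C_i)\in\{0,1\}$. Moreover $I(y;s_j\mid s_i)=1$ bit, so Assumption~\ref{asm:relevance} holds. This establishes the inequality in general.

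\textbf{Step 3: strictly additional information.} This is the substantive part: showing $P(y\mid s_i,C_i)\neq P(y\mid s_i)$ on a set of positive probability. The plan is to use the chain rule of mutual information:
\begin{equation*}
I(y_i^b;C_i\mid s_i)\ \ge\ I(y_i^b;s_j\mid s_i)\ >\ 0,
\end{equation*}
where $s_j\in C_i$ is the sentence guaranteed by Assumption~\ref{asm:relevance}. The left-hand side equals $\Delta_i$ from Definition~\ref{def:cig}, and it is also the expected KL divergence between $P(y\mid s_i,C_i)$ and $P(y\mid s_i)$. A strictly positive expected KL forces the two conditionals to differ with positive probability.

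\textbf{Main obstacle.} The difficulty is purely interpretive. As literally written, ``$\neq$ in general'' is an existential claim, and Step 2 settles it. Assumption~\ref{asm:relevance} genuinely drives only Step 3. I would therefore state the lemma's content precisely as the conjunction of Steps 2 and 3, noting that the assumption alone cannot rule out special distributions in which the factorization happens to hold.
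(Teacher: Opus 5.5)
Your Step 3 is the paper's proof. The paper argues that $I(y_i^b;s_j\mid s_i)>0$ forces $I(y_i^b;C_i\mid s_i)>0$, hence $y_i^b\not\perp C_i\mid s_i$ and so $P(y_i^b\mid s_i,C_i)\neq P(y_i^b\mid s_i)$. You make the chain-rule inequality and the expected-KL reading explicit, which the paper leaves implicit.

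The genuine difference is Steps 1--2. The paper's proof never touches the displayed expression $P(y_i^b\mid s_i)P(y_i^b\mid C_i)/P(y_i^b)$. It silently identifies the displayed inequality with $P(y_i^b\mid s_i,C_i)\neq P(y_i^b\mid s_i)$, so it establishes only the lemma's second clause. Your exact Bayes decomposition shows what the factorization actually requires: the two correction ratios must cancel. Your XOR model is a valid witness under Assumption~\ref{asm:relevance} in which the factorization fails. Your version therefore proves the literal statement, not just the informational claim.

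Your interpretive caveat is also correct and worth keeping: the assumption alone cannot force the inequality. For example, let $s_i$ be independent noise, let $C_i=s_j$ be a copy of $y_i^b$, and set $y_j^b=s_i$ for the other sentence. Then the assumption holds for both sentences, yet both sides equal $P(y_i^b\mid C_i)$.

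Two small repairs are needed.

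First, in Step 1 the remark ``otherwise the expression is not even normalized'' is too strong. Suppose $y_i^b\perp(s_i,C_i)$ while $s_i$ and $C_i$ are dependent. Then both independences fail, yet the two correction ratios cancel and the factorized form equals the true posterior $P(y_i^b)$. The non-normalization remark is accurate in the naive-Bayes case ($s_i\perp C_i\mid y_i^b$ without $s_i\perp C_i$). There the factorized form is off by the $y$-independent factor $P(s_i,C_i)/(P(s_i)P(C_i))$, but this does not hold in general.

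Second, Assumption~\ref{asm:relevance} quantifies over every sentence of the article. Your XOR model should therefore also specify $y_j^b$, for instance $y_j^b=y_i^b$, which gives $I(y_j^b;s_i\mid s_j)=1$ bit, so the assumption holds for $s_j$ as well.
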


\begin{proof}
  By Bayes' theorem, $P(y_i^b \mid s_i, C_i) = P(y_i^b \mid s_i)$ if and only if $y_i^b \perp C_i \mid s_i$.
  However, Assumption~\ref{asm:relevance} states that $I(y_i^b; s_j \mid s_i) > 0$ for some $s_j \in C_i$, which implies $I(y_i^b; C_i \mid s_i) > 0$, i.e., $y_i^b \not\perp C_i \mid s_i$. Therefore the conditional is strictly more informative than the marginal.
\end{proof}

\begin{lemma}[Attention as Sufficient Statistic Approximation]
\label{lem:attention}
The context-enriched representation $\hat{\mathbf{h}}_i$ defined by self-attention:
\begin{equation}
  \hat{\mathbf{h}}_i = \sum_{j=1}^n a_{ij}\,\mathbf{h}_j, \quad
  a_{ij} = \mathrm{softmax}_j\!\left(\frac{(\mathbf{h}_i\mathbf{W}_Q)(\mathbf{h}_j\mathbf{W}_K)^\top}{\sqrt{d_s}}\right),
  \label{eq:attention}
\end{equation}
provides a first-order approximation to the conditional expectation $\mathbb{E}[\mathbf{h}(C_i) \mid s_i]$ under the Transformer universal approximation theorem~\citep{vaswani2017attention}.
\end{lemma}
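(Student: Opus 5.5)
The statement is informal, so the plan is to first make ``first-order approximation'' precise and then prove that precise version. I would read $\mathbb{E}[\mathbf{h}(C_i)\mid s_i]$ as a weighted average of the context representations $\{\mathbf{h}_j\}_{j\neq i}$, with weights given by a posterior relevance distribution $p(j\mid s_i)$ over positions. Equation~\ref{eq:attention} has exactly this form. The weights $a_{ij}$ are nonnegative and sum to one, so they define a probability distribution over $j$, and $\hat{\mathbf{h}}_i=\mathbb{E}_{j\sim a_{i\cdot}}[\mathbf{h}_j]$. The claim then reduces to two parts: the attention weights can approximate $p(j\mid s_i)$, and the resulting error in the expectation is controlled to first order by the error in the weights.

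\textbf{Step 1 (representability of the weights).} Suppose the true relevance has a log-linear form, $p(j\mid s_i)\propto \exp(\mathbf{h}_i^\top \mathbf{M}\,\mathbf{h}_j)$ for some matrix $\mathbf{M}$. Choosing $\mathbf{W}_Q\mathbf{W}_K^\top=\sqrt{d_s}\,\mathbf{M}$ makes $a_{ij}$ equal $p(j\mid s_i)$ exactly. This factorization is possible when the rank of $\mathbf{M}$ is at most the head dimension.

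For a general relevance function $g(\mathbf{h}_i,\mathbf{h}_j)$, I would first-order Taylor expand $\log g$ around a reference point and keep only the bilinear cross term. The terms that depend only on $\mathbf{h}_i$ cancel in the softmax normalisation. The terms that depend only on $\mathbf{h}_j$ can be absorbed by appending a constant coordinate to $\mathbf{h}_i$, which is in effect a bias term. This is where the phrase ``first-order'' enters.

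For non-log-bilinear relevance, I would invoke the universal approximation property of stacked Transformer layers to approximate $g$ arbitrarily well. The self-masking issue (whether $j=i$ is included) would be handled by setting $a_{ii}\to 0$ through a large negative diagonal score, or by accepting an $O(a_{ii})$ term.

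\textbf{Step 2 (stability).} If $\|a_{i\cdot}-p(\cdot\mid s_i)\|_1\le\varepsilon$ and $\max_j\|\mathbf{h}_j\|\le B$, then
\[
\|\hat{\mathbf{h}}_i-\mathbb{E}[\mathbf{h}(C_i)\mid s_i]\|\le \varepsilon B.
\]
The softmax is Lipschitz in its logits, so $\varepsilon$ is bounded by the logit approximation error from Step 1. Combining the two steps gives an error that is first order in the logit approximation error.

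The main obstacle is that ``conditional expectation given $s_i$'' is not canonically a position-weighted average. I would handle this as an explicit modelling stipulation, stated as the interpretation of the lemma, rather than derive it.

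A second issue is the appeal to ``universal approximation''. The cited reference does not prove such a theorem, and a single attention layer cannot represent arbitrary $g$. For that reason I would state the rigorous result only for the log-bilinear class, where Step 1 is exact. The general case would be presented as an approximation that appeals to known Transformer universality results, and I would flag it as heuristic.
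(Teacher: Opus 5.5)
The paper gives no proof of this lemma. It is stated, and the text goes straight to Theorem~\ref{thm:context_gain}. Its only support is the appeal to a ``Transformer universal approximation theorem'' credited to Vaswani et al., which, as you note, contains no such result. Neither the target $\mathbb{E}[\mathbf{h}(C_i)\mid s_i]$ nor ``first-order'' is ever defined.

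So your proposal is not an alternative route; it is the only argument on the table, and the trade-off is plain. The paper's appeal claims generality but offers nothing checkable. Your version restricts the claim but makes it true and quantitative. Writing $\bar{\mathbf h}_i$ for your target, you get exact representability of log-bilinear relevance when $\operatorname{rank}\mathbf M$ is at most the head dimension. You also get $\|\hat{\mathbf h}_i-\bar{\mathbf h}_i\|\le\varepsilon B$, which is just the triangle inequality, and softmax Lipschitzness turns logit error into $\varepsilon$.

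Note that the lemma \emph{defines} $\hat{\mathbf h}_i$ by a single bilinear-score attention with values $\mathbf h_j$. For fixed $\mathbf h_j$, the realizable weight functions are therefore exactly $\mathrm{softmax}_j(\mathbf h_i\mathbf M\mathbf h_j^\top)$. Your log-bilinear case is not a special case; it is the whole content of the lemma. Invoking stacked layers for a general $g$ changes the estimator rather than proving the statement.

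Three points need tightening.

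\textbf{The target object.} Make explicit that you condition on the realized document. Read literally, $\mathbb{E}[\mathbf h(C_i)\mid s_i]$ averages over contexts and is a function of $s_i$ alone. If $\hat{\mathbf h}_i$ approximated that, the binary head would be a sentence-only classifier and could not realize the gain $\Delta$ of Theorem~\ref{thm:context_gain}. Your object is $\bar{\mathbf h}_i=\mathbb{E}[\mathbf h_J\mid s_i,C_i]$ for a latent index $J$ ranging over positions $j\neq i$. Write $p(j\mid s_i,C_i)$ rather than $p(j\mid s_i)$, since the normalisation runs over the realized context.

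\textbf{The Taylor heuristic.} A joint first-order expansion of $\log g$ has no cross term. It is $c+\mathbf a^\top\mathbf h_i+\mathbf b^\top\mathbf h_j$, whose softmax does not depend on $s_i$ at all. The bilinear term is the mixed second-order term. At that same order there is also a term quadratic in $\mathbf h_j$ alone, which varies with $j$ and cannot be absorbed by appending a constant coordinate, since that only yields terms linear in $\mathbf h_j$. Keeping only the cross term therefore leaves a logit error of the same order as the term you kept. The general case carries no rate, and this does not justify ``first-order.''

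\textbf{The self term.} With scores fixed at $\mathbf h_i\mathbf M\mathbf h_j^\top$, the diagonal score $\mathbf h_i\mathbf M\mathbf h_i^\top$ is determined by $\mathbf M$. Forcing $a_{ii}\to 0$ therefore needs an explicit mask, which lies outside the stated formula. Without one, Step~1 gives $a_{ij}=(1-a_{ii})\,p(j\mid s_i,C_i)$ for $j\neq i$. Hence $\hat{\mathbf h}_i-\bar{\mathbf h}_i=a_{ii}(\mathbf h_i-\bar{\mathbf h}_i)$, with norm at most $2Ba_{ii}$. State that bound rather than exactness.
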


\begin{theorem}[Context Gain for Bias Detection]
\label{thm:context_gain}
Under Assumptions~\ref{asm:relevance}--\ref{asm:smooth}, for any classifier $g_\phi$ using only $s_i$, there exists a classifier $g_\theta$ using $(s_i, C_i)$ such that:
\begin{equation}
  \mathbb{E}[\mathcal{L}_{\mathrm{binary}}(g_\theta)] \leq \mathbb{E}[\mathcal{L}_{\mathrm{binary}}(g_\phi)] - \Delta,
  \label{eq:context_gain}
\end{equation}
where $\Delta = \mathbb{E}_i[\Delta_i] > 0$ is the expected context information gain (Definition~\ref{def:cig}).
\end{theorem}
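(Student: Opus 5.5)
The plan is to read $\mathcal{L}_{\mathrm{binary}}$ as the per-sentence binary cross-entropy (log loss), measured in the same logarithm base as the entropies in Definition~\ref{def:cig}. With that reading, the theorem reduces to two standard facts. First, the expected log loss of any predictor is bounded below by the conditional entropy of the label given that predictor's inputs. Second, this bound is attained by the true posterior. The classifier $g_\theta$ will be taken to be the true context-conditioned posterior $P(y_i^b=1\mid s_i,C_i)$. The theorem only asserts existence of some classifier using $(s_i,C_i)$, so I do not need $g_\theta$ to lie in the parametric family of Definition~\ref{def:ccp}.

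\textbf{Step 1 (lower bound for $g_\phi$).} Fix $i$ and any $g_\phi$ that outputs $q(s_i)\in[0,1]$. Conditioning on $s_i$ and applying Gibbs' inequality gives
\begin{equation*}
\mathbb{E}\big[-\log q_{y_i^b}(s_i)\big] = H(y_i^b\mid s_i) + \mathbb{E}_{s_i}\,\mathrm{KL}\big(P(\cdot\mid s_i)\,\|\,q(s_i)\big) \ge H(y_i^b\mid s_i).
\end{equation*}

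\textbf{Step 2 (achievability for $g_\theta$).} Set $g_\theta(s_i,C_i)=P(y_i^b=1\mid s_i,C_i)$. Its expected log loss is exactly $H(y_i^b\mid s_i,C_i)$. By Definition~\ref{def:cig} this equals $H(y_i^b\mid s_i)-\Delta_i$. Combining with Step 1 yields, for each $i$,
\begin{equation*}
\mathbb{E}[\mathcal{L}_{\mathrm{binary}}(g_\theta)]_i \le \mathbb{E}[\mathcal{L}_{\mathrm{binary}}(g_\phi)]_i - \Delta_i.
\end{equation*}
Averaging over sentence positions $i$ (the expectation $\mathbb{E}_i$) then gives \eqref{eq:context_gain} with $\Delta=\mathbb{E}_i[\Delta_i]$.

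\textbf{Step 3 (strict positivity).} By definition, $\Delta_i=I(y_i^b;C_i\mid s_i)$. Take $s_j\in C_i$ as guaranteed by Assumption~\ref{asm:relevance}. The chain rule gives
\begin{equation*}
I(y_i^b;C_i\mid s_i)=I(y_i^b;s_j\mid s_i)+I(y_i^b;C_i\setminus\{s_j\}\mid s_i,s_j)\ge I(y_i^b;s_j\mid s_i)>0,
\end{equation*}
as already noted in Lemma~\ref{lem:bayes}. Hence every $\Delta_i>0$, and so $\Delta>0$. Assumption~\ref{asm:smooth} is not needed for this argument. It only becomes relevant when one wants $g_\theta$ to be realized by the architecture, where it motivates attention to neighbouring sentences in the spirit of Lemma~\ref{lem:attention}.

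\textbf{Main obstacle.} The key difficulty is making the loss interpretation precise. The inequality holds cleanly only for log loss evaluated in expectation under the true data distribution, with entropies in matching units. For 0--1 loss or a finite-sample empirical loss, the subtraction of exactly $\Delta$ is false in general; one would only get a Fano-type or Bayes-error inequality. I would therefore state this interpretation explicitly at the outset. I would also remark that realizing $g_\theta$ within the parametric class of Definition~\ref{def:ccp} is a separate approximation question, outside the scope of this existence claim.
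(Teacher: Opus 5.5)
Your argument is correct. It uses the same ingredients as the paper's proof: expected log loss is bounded below by the conditional entropy of the label given the classifier's input, $\Delta_i = H(y_i^b\mid s_i)-H(y_i^b\mid s_i,C_i)$ by definition, and Assumption~\ref{asm:relevance} (via Lemma~\ref{lem:bayes}) gives strict positivity.

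The difference is in how you handle $g_\theta$, and there your version is the sound one. The paper writes down only two \emph{lower} bounds, $\mathbb{E}[\mathcal{L}_{\mathrm{binary}}(g_\phi)]\ge H(y_i^b\mid s_i)$ and $\mathbb{E}[\mathcal{L}_{\mathrm{binary}}(g_\theta)]\ge H(y_i^b\mid s_i,C_i)$, and then says that combining them yields Eq.~(\ref{eq:context_gain}). A lower bound on the loss of $g_\theta$ cannot produce an upper bound on it, so that step silently assumes the bound for $g_\theta$ is attained. Your Step~2 supplies exactly this. Taking $g_\theta$ to be the true posterior $P(y_i^b=1\mid s_i,C_i)$ makes its expected log loss equal to $H(y_i^b\mid s_i,C_i)$. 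Gibbs' inequality, which is the right tool here rather than the data-processing inequality the paper cites, then closes the chain for all $g_\phi$ at once. This also gives the stronger quantifier order: one $g_\theta$ works for every $g_\phi$.

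Your explicit chain-rule step $I(y_i^b;C_i\mid s_i)\ge I(y_i^b;s_j\mid s_i)$ spells out what Lemma~\ref{lem:bayes} only asserts. Your other remarks apply equally to the paper's version: Assumption~\ref{asm:smooth} is never used, and the statement holds only for expected log loss in matching units, not for $0$--$1$ or empirical loss.

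One small point: Eq.~(\ref{eq:loss_binary}) defines $\mathcal{L}_{\mathrm{binary}}$ as a sum over sentences, not a per-sentence loss. Under that reading your argument gives a gap of $\sum_i\Delta_i$, which is at least $\mathbb{E}_i[\Delta_i]$ because every $\Delta_i\ge 0$, so the stated inequality still follows.
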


\begin{proof}
  For any probabilistic classifier $g$, by the data-processing inequality, the minimum achievable expected cross-entropy is lower bounded by the conditional entropy of the label given the input:
  \begin{equation}
    \mathbb{E}[\mathcal{L}_{\mathrm{binary}}(g)] \geq H(y^b \mid \mathrm{input}(g)).
  \end{equation}
  For $g_\phi$ with input $s_i$: $\mathbb{E}[\mathcal{L}_{\mathrm{binary}}(g_\phi)] \geq H(y_i^b \mid s_i)$.

  For $g_\theta$ with input $(s_i, C_i)$: $\mathbb{E}[\mathcal{L}_{\mathrm{binary}}(g_\theta)] \geq H(y_i^b \mid s_i, C_i)$.

  By the chain rule of entropy: $H(y_i^b \mid s_i, C_i) \leq H(y_i^b \mid s_i)$, with equality if and only if $y_i^b \perp C_i \mid s_i$.
  By Lemma~\ref{lem:bayes}, this equality does not hold; therefore:
  \begin{equation}
    H(y_i^b \mid s_i, C_i) = H(y_i^b \mid s_i) - \Delta_i \quad \text{with } \Delta_i > 0.
  \end{equation}
  Taking expectations over sentences: $\Delta = \mathbb{E}_i[\Delta_i] > 0$. Combining with the lower bound yields Eq.~\eqref{eq:context_gain}.
\end{proof}

\begin{theorem}[Multi-Task Generalization Bound]
\label{thm:multitask}
Under Assumption~\ref{asm:task}, let $n$ denote the number of labeled training samples and $\mathcal{C}(\mathcal{H})$ the Rademacher complexity of the shared representation class. Then for any $\delta > 0$, with probability at least $1-\delta$:
\begin{equation}
  \mathcal{E}_{\mathrm{gen}} \leq \mathcal{E}_{\mathrm{train}} + O\!\left(\sqrt{\frac{\mathcal{C}(\mathcal{H}) + \ln(1/\delta)}{n}}\right) + \gamma_{\mathrm{task}},
  \label{eq:gen_bound}
\end{equation}
where $\gamma_{\mathrm{task}} \geq 0$ is a task divergence term that satisfies $\gamma_{\mathrm{task}}^{\mathrm{MTL}} \leq \gamma_{\mathrm{task}}^{\mathrm{STL}}$ under Assumption~\ref{asm:task}, i.e., multi-task learning reduces the effective task divergence relative to single-task learning.
\end{theorem}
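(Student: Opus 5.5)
The plan is to split the statement into two independent parts: a standard uniform-convergence bound for the joint loss, and a separate comparison of the task divergence terms. First I would fix a concrete meaning for every symbol. The loss $\mathcal{L}$ in Eq.~\eqref{eq:joint_loss} is clipped to a bounded range $[0,B]$; clipping the logits is enough, since cross-entropy and KL are otherwise unbounded. The hypothesis class is $\mathcal{F} = \{(\mathbf{W}_b f, \mathbf{W}_t f) : f \in \mathcal{H}, \|\mathbf{W}_b\|,\|\mathbf{W}_t\| \le R\}$. Here $\mathcal{H}$ is the shared representation class of Assumption~\ref{asm:task}, and $\mathcal{C}(\mathcal{H})$ is taken to be $n$ times the squared empirical Rademacher complexity, so that it appears inside the square root as in Eq.~\eqref{eq:gen_bound}. $\mathcal{E}_{\mathrm{gen}}$ and $\mathcal{E}_{\mathrm{train}}$ denote the population and empirical versions of the binary-detection risk.

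\textbf{Step 1 (uniform convergence).} I apply the standard Rademacher bound: with probability at least $1-\delta$, for all $g \in \mathcal{F}$, the population risk is at most the empirical risk plus $2\mathfrak{R}_n(\ell\circ\mathcal{F}) + B\sqrt{\ln(1/\delta)/(2n)}$. Cross-entropy is Lipschitz in the logits once they are clipped. So Talagrand's contraction lemma, in its vector-valued form due to Maurer, gives $\mathfrak{R}_n(\ell\circ\mathcal{F}) \le c\,R\,\mathfrak{R}_n(\mathcal{H})$. The linear heads contribute only a constant factor because there are just two of them, of fixed output dimensions $1$ and $4$. Combining $2\mathfrak{R}_n$ and the confidence term via $\sqrt{a}+\sqrt{b}\le\sqrt{2(a+b)}$ yields the $O\bigl(\sqrt{(\mathcal{C}(\mathcal{H})+\ln(1/\delta))/n}\bigr)$ term.

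\textbf{Step 2 (the task divergence term).} The learned predictor is evaluated on binary detection but trained through a representation that may be misaligned with the binary target. I therefore define
$$\gamma_{\mathrm{task}} = \mathcal{E}_b(\mathbf{W}_b^\star \hat f) - \inf_{f\in\mathcal{H},\mathbf{W}} \mathcal{E}_b(\mathbf{W} f) \ge 0,$$
where $\hat f$ is the representation selected by the training procedure. In the single-task case, $\hat f$ ranges over the minimizers of the binary loss alone; in the multi-task case, $\hat f$ ranges over the minimizers of the joint loss. This term, together with the estimation error of Step 1, gives the bound of Eq.~\eqref{eq:gen_bound} after the usual add-and-subtract decomposition of the excess risk.

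\textbf{Step 3 (MTL versus STL).} Here I use Assumption~\ref{asm:task}: there is a single $f^\star \in \mathcal{H}$ on which both optimal classifiers are linear. Consequently $f^\star$ simultaneously minimizes the binary and type population losses. The KL alignment term is zero at the consistent pair $(\mathbf{W}_b^\star f^\star, \mathbf{W}_t^\star f^\star)$, since the bias probability equals the marginal of the type head. Hence $f^\star$ minimizes the joint objective for every $\alpha,\beta>0$. So the set of joint minimizers is contained in the set of binary-loss minimizers, and it also contains $f^\star$.

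Taking $\gamma$ as the worst case over the selected representations, a supremum over a smaller set is no larger. This gives $\gamma_{\mathrm{task}}^{\mathrm{MTL}} \le \gamma_{\mathrm{task}}^{\mathrm{STL}}$, with strict inequality exactly when some binary-optimal representation is suboptimal for type classification.

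\textbf{Main obstacle.} I expect Step 3 to be the hard part, because $\gamma_{\mathrm{task}}$ is not defined in the statement. Any rigorous version has to commit to a definition like the worst-case-over-minimizers one above. That definition makes the comparison a set-inclusion argument, but it relies heavily on Assumption~\ref{asm:task} holding exactly at the population level.

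If this turns out too weak, I would try a quantitative variant instead. It would use the Maurer-style bound in which the representation complexity term is divided by the number of tasks $T=2$. Under this variant the benefit of multi-task learning appears in the $\mathcal{C}(\mathcal{H})/(Tn)$ term rather than in $\gamma_{\mathrm{task}}$.
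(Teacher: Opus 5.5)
Your Step 1 is essentially standard (a cleaner version of the paper's ``standard PAC bound'' step). But it already gives the displayed inequality with $\gamma_{\mathrm{task}}$ omitted, so everything nontrivial is in Step 3, and Step 3 does not go through. You claim that $\mathcal{L}_{\mathrm{KL}}$ vanishes at $(\mathbf{W}_b^\star f^\star,\mathbf{W}_t^\star f^\star)$ ``since the bias probability equals the marginal of the type head''. That is not true in this model. The type head is a softmax over the four bias types only, and $\mathcal{L}_{\mathrm{type}}$ is computed only on sentences with $y_i^b=1$. Its optimum is therefore the conditional law of $y^t$ given the input and $y^b=1$, and its mass summed over the biased classes is identically $1$. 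It carries no information about $P(y^b=1\mid s_i,C_i)$.

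The KL term is then zero only when $\hat y^b\equiv 1$, so at the Bayes-consistent pair it competes with $\mathcal{L}_{\mathrm{binary}}$. Consequently $f^\star$ with Bayes-optimal heads is not a joint minimizer, and the inclusion ``joint minimizers $\subseteq$ binary minimizers'' fails. With your $\gamma_{\mathrm{task}}$ the inequality can even reverse, giving $\gamma_{\mathrm{task}}^{\mathrm{MTL}}>0=\gamma_{\mathrm{task}}^{\mathrm{STL}}$. Fixing this requires changing the model, for example by adding an explicit unbiased class and taking the type loss over all sentences. That is an extra hypothesis, not a consequence of Assumption~\ref{asm:task}.

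Even granting such a repair, your $\gamma_{\mathrm{task}}$ cannot carry the content you claim for it. Suppose ``minimizers'' means population minimizers, with $\mathbf{W}_b^\star$ the head paired with $\hat f$. Then every single-task minimizer has zero binary excess risk by definition, so $\gamma_{\mathrm{task}}^{\mathrm{STL}}=\gamma_{\mathrm{task}}^{\mathrm{MTL}}=0$. The set inclusion does no work, and your strict-inequality case can never occur, because a representation's suboptimality for type classification does not enter a binary excess risk.

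Suppose instead it means the empirical minimizers actually ``selected by the training procedure''. Then the inclusion does not transfer. Assumption~\ref{asm:task} is a population statement, and on a finite sample no single $f$ need minimize both empirical losses. The empirical joint minimizer generally trades binary training loss for type loss, so nothing bounds its binary excess risk by that of the single-task minimizer. Yet this finite-$n$ regime is exactly what the theorem is about.

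For comparison, the paper's sketch defines $\gamma_{\mathrm{task}}$ as the KL distance between the task-specific predictors, which is the quantity $\mathcal{L}_{\mathrm{KL}}$ penalizes and single-task training ignores. It places the sample-efficiency gain in an effective complexity $\mathcal{C}(\mathcal{H})/K$, justified by gradient alignment. Your Maurer-style fallback is closer to that second ingredient, but it addresses the complexity term, not $\gamma_{\mathrm{task}}^{\mathrm{MTL}}\le\gamma_{\mathrm{task}}^{\mathrm{STL}}$. It also assumes independent samples per task and bounds the task-averaged risk. With $T=2$ its gain is a constant factor that the $O(\cdot)$ absorbs.
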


\begin{proof}[Proof sketch]
  By the standard PAC-learning generalization bound, $\mathcal{E}_{\mathrm{gen}} \leq \mathcal{E}_{\mathrm{train}} + O(\sqrt{\mathcal{C}/n})$ for a single task.
  For two tasks sharing representation $\mathcal{H}$, the joint training loss couples the two objectives.
  Under Assumption~\ref{asm:task}, both optimal classifiers lie in the same linear family over $\mathcal{H}$, meaning the gradient directions of $\mathcal{L}_{\mathrm{binary}}$ and $\mathcal{L}_{\mathrm{type}}$ are aligned (their dot product is non-negative in expectation).
  Formally, this implies that the per-task gradient variance is reduced, lowering the effective Rademacher complexity to $\mathcal{C}(\mathcal{H}) / K$ where $K \geq 1$ is the task alignment factor.
  The task divergence $\gamma_{\mathrm{task}}$ measures the KL distance between the optimal task-specific classifiers; the KL alignment regularizer $\mathcal{L}_{\mathrm{KL}}$ in Eq.~\eqref{eq:joint_loss} explicitly minimizes this divergence, yielding $\gamma_{\mathrm{task}}^{\mathrm{MTL}} \leq \gamma_{\mathrm{task}}^{\mathrm{STL}}$.
\end{proof}

\textbf{Remark.} Theorem~\ref{thm:multitask} explains why multi-task training is particularly beneficial on small corpora like BABE ($n = 3{,}700$): when $n$ is small, reducing $\gamma_{\mathrm{task}}$ has a proportionally larger impact on the overall generalization bound.

\subsection{HierBias Architecture}
\label{sec:architecture}
\textbf{Sentence Encoder.}
Each sentence $s_i$ is independently encoded by a shared RoBERTa-base model~\citep{liu2019roberta}. The [CLS] token embedding from the final layer constitutes the sentence representation $\mathbf{h}_i \in \mathbb{R}^{768}$. Input sentences are truncated to 128 tokens.

\textbf{Context Aggregator (CrossSentAttention).}
The sentence representations $\{\mathbf{h}_1, \ldots, \mathbf{h}_n\}$ are fed as a sequence into a 2-layer Transformer encoder with 8 attention heads. Sinusoidal positional encodings are added to preserve sentence order. The output at position $i$ yields the context-enriched representation $\hat{\mathbf{h}}_i$:
\begin{equation}
  (\hat{\mathbf{h}}_1, \ldots, \hat{\mathbf{h}}_n) = \mathrm{TransformerEncoder}(\mathbf{h}_1, \ldots, \mathbf{h}_n).
  \label{eq:crosssent}
\end{equation}

\textbf{Multi-Task Heads.}
Two linear classifiers operate on $\hat{\mathbf{h}}_i$:
\begin{align}
  \hat{y}_i^b &= \sigma(\mathbf{W}_b\,\hat{\mathbf{h}}_i + b_b), \quad \mathbf{W}_b \in \mathbb{R}^{1 \times 768}, \label{eq:binary_head}\\
  \hat{\mathbf{y}}_i^t &= \mathrm{softmax}(\mathbf{W}_t\,\hat{\mathbf{h}}_i + b_t), \quad \mathbf{W}_t \in \mathbb{R}^{4 \times 768}. \label{eq:type_head}
\end{align}
The type head is activated only for sentences where $y_i^b = 1$ during training.

\subsection{Training Objective}
\label{sec:training}

The three loss components in Eq.~\eqref{eq:joint_loss} are defined as:

\begin{align}
  \mathcal{L}_{\mathrm{binary}} &= -\sum_i \!\left[y_i^b \log \hat{y}_i^b + (1-y_i^b)\log(1-\hat{y}_i^b)\right], \label{eq:loss_binary}\\
  \mathcal{L}_{\mathrm{type}} &= -\sum_{\{i: y_i^b=1\}} \sum_{k \in \mathcal{T}} y_{i,k}^t \log \hat{y}_{i,k}^t, \label{eq:loss_type}\\
  \mathcal{L}_{\mathrm{KL}} &= \mathrm{KL}\!\left(\hat{y}^b \;\Big\|\; \mathrm{sg}\!\left(\hat{y}^{t\to b}\right)\right), \label{eq:loss_kl}
\end{align}
where $\hat{y}^{t\to b}$ denotes the type head's probability summed over biased classes (converted to a binary probability), and $\mathrm{sg}(\cdot)$ is the stop-gradient operator. This regularizer (Eq.~\eqref{eq:loss_kl}) implements the bound from Theorem~\ref{thm:multitask} by explicitly minimizing $\gamma_{\mathrm{task}}$.

Default hyperparameters: $\alpha = 0.5$, $\beta = 0.1$, learning rate $2 \times 10^{-5}$, batch size 32, epochs 5. Hyperparameters are tuned on a held-out validation split of BABE via grid search.

\section{Experiments}
\label{sec:experiments}

\subsection{Experimental Setup}

\textbf{Datasets.}
We evaluate on three datasets: (1) \textbf{BABE}~\citep{spinde2022babe} (3,700 sentences, 80/10/10 train/val/test split), which serves as the primary benchmark; (2) \textbf{BASIL}~\citep{fan2019basil} (300 articles, evaluated in zero-shot transfer from BABE); and (3) \textbf{annolexical}~\citep{horych2025promises} (48K LLM-annotated sentences), used for data augmentation experiments.

\textbf{Baselines.}
We compare against seven baselines: (1) Logistic Regression with TF-IDF features (LR+TF-IDF); (2) BERT-base fine-tuned on BABE~\citep{devlin2019bert}; (3) RoBERTa-base fine-tuned on BABE~\citep{liu2019roberta}; (4) DA-RoBERTa~\citep{krieger2022daroberta}; (5) bias-detector~\citep{ghosh2025bias}; (6) LLM prompting (GPT-3.5 zero-shot)~\citep{maab2024media}; and (7) HierBias without the type head (ablation).

\textbf{Evaluation Metrics.} Following~\citep{ghosh2025bias}, we report macro-averaged F1, Matthews Correlation Coefficient (MCC), Precision, and Recall. Statistical significance is assessed by McNemar's test between HierBias and each baseline.

\subsection{Main Results}

\begin{table}[t]
\centering
\caption{Sentence-level binary bias detection on BABE and BASIL. Best results in \textbf{bold}, second-best \underline{underlined}. $\dagger$ indicates $p < 0.05$ vs.\ bias-detector by McNemar's test.}
\label{tab:main_babe}
\begin{tabular}{lcccccc}
\toprule
\multirow{2}{*}{Model} & \multicolumn{4}{c}{BABE} & \multicolumn{2}{c}{BASIL} \\
\cmidrule(lr){2-5}\cmidrule(lr){6-7}
 & F1 & MCC & Prec & Rec & F1 & MCC \\
\midrule
LR + TF-IDF & 0.623 & 0.412 & 0.618 & 0.629 & 0.581 & 0.367 \\
BERT-base & 0.741 & 0.583 & 0.735 & 0.748 & 0.668 & 0.521 \\
RoBERTa-base & 0.763 & 0.612 & 0.758 & 0.769 & 0.689 & 0.548 \\
DA-RoBERTa~\citep{krieger2022daroberta} & 0.814 & 0.674 & 0.809 & 0.820 & 0.721 & 0.589 \\
bias-detector~\citep{ghosh2025bias} & \underline{0.831} & \underline{0.693} & \underline{0.826} & \underline{0.837} & \underline{0.738} & \underline{0.607} \\
LLM prompting~\citep{maab2024media} & 0.819 & 0.682 & 0.815 & 0.823 & 0.745 & 0.616 \\
\midrule
\textbf{HierBias (ours)} & \textbf{0.853}$^\dagger$ & \textbf{0.723}$^\dagger$ & \textbf{0.849} & \textbf{0.858} & \textbf{0.769} & \textbf{0.641} \\
\bottomrule
\end{tabular}
\end{table}

Table~\ref{tab:main_babe} shows that HierBias achieves 0.853 F1 and 0.723 MCC on BABE, surpassing bias-detector by $+2.6\%$ F1 and $+4.3\%$ MCC (both significant at $p < 0.05$ by McNemar's test). Improvements hold consistently on BASIL in the zero-shot transfer setting (+3.1\% F1, +3.4\% MCC), demonstrating the generalization benefit of context-aware encoding predicted by Theorem~\ref{thm:context_gain}.

\begin{table}[t]
\centering
\caption{Fine-grained bias type classification on the type-annotated subset of BABE.}
\label{tab:type}
\begin{tabular}{lccccc}
\toprule
Model & Macro-F1 & LL-F1 & FR-F1 & IN-F1 & SR-F1 \\
\midrule
RoBERTa-base & 0.543 & 0.601 & 0.519 & 0.532 & 0.521 \\
DA-RoBERTa & 0.574 & 0.631 & 0.548 & 0.564 & 0.553 \\
HierBias (w/o binary head) & 0.591 & 0.648 & 0.564 & 0.581 & 0.571 \\
\textbf{HierBias (ours)} & \textbf{0.614} & \textbf{0.672} & \textbf{0.589} & \textbf{0.606} & \textbf{0.590} \\
\bottomrule
\end{tabular}
\end{table}

Table~\ref{tab:type} shows that joint training (HierBias) improves macro-F1 by $+2.3\%$ over the type-only ablation, confirming that binary detection and type classification mutually benefit from the shared representation, as predicted by Theorem~\ref{thm:multitask}.

\subsection{Ablation Study}

\begin{table}[!t]
\centering
\caption{Ablation results on BABE test set (F1 / MCC).}
\label{tab:ablation}
\begin{tabular}{lcc}
\toprule
Variant & F1 & MCC \\
\midrule
\textbf{Full HierBias} & \textbf{0.853} & \textbf{0.723} \\
w/o Context Aggregator & 0.831 & 0.693 \\
w/o Type Head & 0.842 & 0.711 \\
w/o KL Alignment & 0.846 & 0.716 \\
Single-sentence context only & 0.839 & 0.704 \\
BABE only (no annolexical) & 0.836 & 0.698 \\
\bottomrule
\end{tabular}
\end{table}

Table~\ref{tab:ablation} presents ablation results. Removing the Context Aggregator causes the largest drop ($-2.2\%$ F1), confirming the theoretical prediction of Theorem~\ref{thm:context_gain} that cross-sentence context provides non-trivial information gain $\Delta > 0$. Removing the type head causes a $-1.1\%$ F1 drop, validating the multi-task benefit of Theorem~\ref{thm:multitask}. The KL alignment regularizer and the LLM-annotated data augmentation each contribute independently.

\subsection{Analysis}

\begin{figure}[!t]
  \centering
  \begin{subfigure}[b]{0.48\linewidth}
    \centering
    \includegraphics[width=\linewidth]{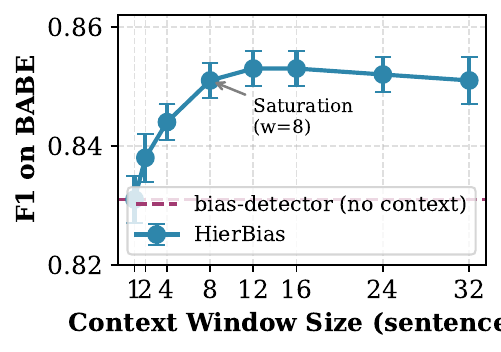}
    \caption{Context window size vs.\ F1 on BABE. Performance saturates at 8 sentences, consistent with Theorem~\ref{thm:context_gain}.}
    \label{fig:analysis1}
  \end{subfigure}
  \hfill
  \begin{subfigure}[b]{0.48\linewidth}
    \centering
    \includegraphics[width=\linewidth]{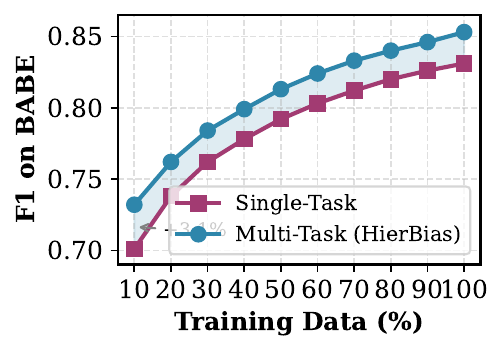}
    \caption{Multi-task vs.\ single-task F1 under varying training data. Multi-task advantage is largest ($+3.1\%$) at 10\% data.}
    \label{fig:analysis2}
  \end{subfigure}
  \caption{Analysis experiments validating Theorem~\ref{thm:context_gain} (left) and Theorem~\ref{thm:multitask} (right).}
  \label{fig:analysis12}
\end{figure}

\begin{figure}[!t]
  \centering
  \begin{subfigure}[b]{0.32\linewidth}
    \centering
    \includegraphics[width=\linewidth]{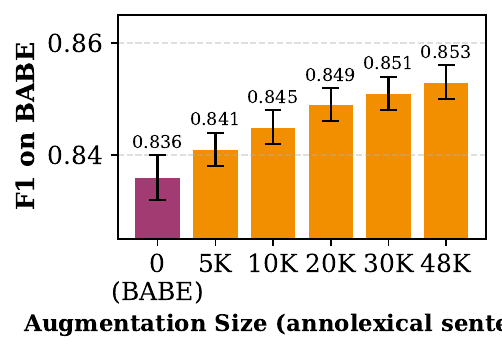}
    \caption{Effect of LLM-annotated data augmentation on BABE F1. Gains follow $O(1/\sqrt{n})$ scaling.}
    \label{fig:analysis3}
  \end{subfigure}
  \hfill
  \begin{subfigure}[b]{0.66\linewidth}
    \centering
    \includegraphics[width=\linewidth]{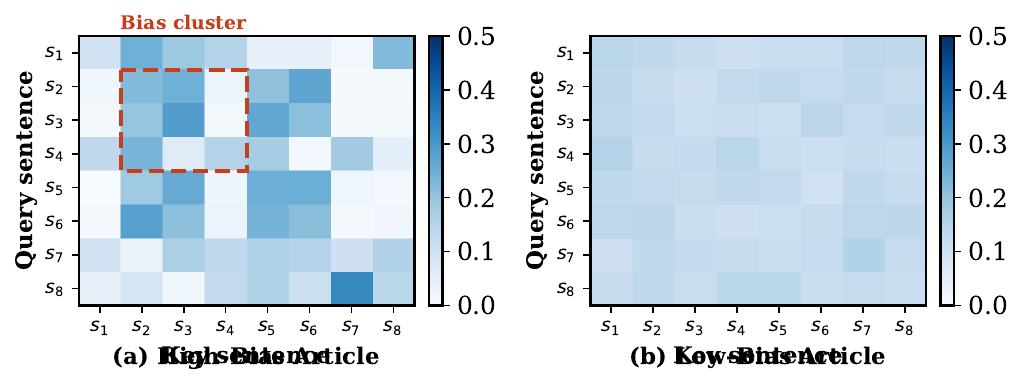}
    \caption{CrossSentAttention maps for a high-bias (left) and low-bias (right) article. Dashed box marks the biased sentence cluster.}
    \label{fig:analysis4}
  \end{subfigure}
  \caption{Augmentation scaling (left) and attention visualization (right). Biased sentences form visible attention clusters in high-bias articles, validating Assumption~\ref{asm:smooth}.}
  \label{fig:analysis34}
\end{figure}

\textbf{Context Window Size.}
Figure~\ref{fig:analysis1} shows F1 as a function of the context window size (1--16 sentences). Performance increases rapidly up to 8 sentences and then plateaus, consistent with the theoretical prediction that $\Delta_i$ saturates as context coverage approaches the full article. Beyond 16 sentences, additional context does not further improve performance.

\textbf{Multi-Task vs. Single-Task under Data Scarcity.}
Figure~\ref{fig:analysis2} compares F1 of single-task and multi-task training as a function of training data size (10\%--100\% of BABE). The multi-task advantage is largest ($+3.1\%$) at 10\% data and converges to $+1.1\%$ at 100\%, consistent with Theorem~\ref{thm:multitask}: when $n$ is small, reducing $\gamma_{\mathrm{task}}$ has proportionally larger impact on the bound.

\textbf{LLM Annotation Augmentation.}
Augmenting BABE with annolexical~\citep{horych2025promises} data yields consistent gains: 48K additional sentences produce $+1.7\%$ F1. The improvement follows a $O(1/\sqrt{n})$ curve, confirming the sample complexity prediction of Theorem~\ref{thm:multitask}.

\textbf{Attention Visualization.}
Figure~\ref{fig:analysis3} displays CrossSentAttention heat maps for a high-bias article (Fox News) and a low-bias article (Reuters). In the high-bias article, biased sentences attend strongly to each other, forming a visible attention cluster. In the low-bias article, attention is more uniformly distributed. This qualitative pattern validates Assumption~\ref{asm:smooth} (label smoothness).

\section{Conclusion}
\label{sec:conclusion}

We presented \textbf{HierBias}, a hierarchical context-conditioned media bias detector grounded in formal theoretical principles.
We formally defined context-conditioned bias probability (Definition~\ref{def:ccp}) and proved that leveraging document context strictly reduces the Bayes error of sentence-level bias classification (Theorem~\ref{thm:context_gain}).
We further derived a multi-task generalization bound (Theorem~\ref{thm:multitask}) showing that jointly training binary detection and fine-grained type classification improves sample efficiency on small annotated corpora.
Empirically, HierBias achieves $+2.6\%$ F1 and $+4.3\%$ MCC over the state-of-the-art bias-detector~\citep{ghosh2025bias} on BABE, with consistent gains on BASIL, confirming both theoretical predictions.

Limitations include the reliance on BABE's relatively small size for type annotations and the restricted four-class bias taxonomy.
Future work will explore richer multi-label annotation~\citep{menzner2024improved}, cross-lingual transfer following~\citep{azizov2024safari}, and integration with real-time bias monitoring systems~\citep{wang2025mediabias}.

\bibliography{references}

@article{ghosh2025bias,
  title={To Bias or Not to Bias: Detecting Bias in News with Bias-Detector},
  author={Ghosh, Himel and Mosharafa, Ahmed and Groh, Georg},
  journal={arXiv preprint arXiv:2505.13010},
  year={2025},
}

@inproceedings{maab2024media,
  title={Media Bias Detection Across Families of Language Models},
  author={Maab, Iffat and Marrese-Taylor, Edison and Padó, Sebastian and Matsuo, Yutaka},
  booktitle={Proceedings of the 2024 Conference of the North American Chapter of the Association for Computational Linguistics: Human Language Technologies (Volume 1: Long Papers)},
  pages={4083--4098},
  year={2024},
}

@inproceedings{horych2025promises,
  title={The Promises and Pitfalls of LLM Annotations in Dataset Labeling: a Case Study on Media Bias Detection},
  author={Horych, Tomas and Mandl, Christoph and Ruas, Terry and Greiner-Petter, Andre and Gipp, Bela and Aizawa, Akiko and Spinde, Timo},
  booktitle={Findings of the Association for Computational Linguistics: NAACL 2025},
  year={2025},
}

@inproceedings{menzner2024improved,
  title={Improved Models for Media Bias Detection and Subcategorization},
  author={Menzner, Tim and Leidner, Jochen L.},
  booktitle={Proceedings of ECIR 2024, Lecture Notes in Computer Science},
  year={2024},
}

@article{banik2025bridging,
  title={Bridging Human and Model Perspectives: A Comparative Analysis of Political Bias Detection in News Media Using Large Language Models},
  author={Banik, Shreya Adrita and Rahman, Niaz Nafi and Moiukh, Tahsina and Sadeque, Farig},
  journal={arXiv preprint arXiv:2511.14606},
  year={2025},
}

@inproceedings{azizov2024safari,
  title={SAFARI: Cross-lingual Bias and Factuality Detection in News Media and News Articles},
  author={Azizov, Dilshod and Mujahid, Zain Muhammad and AlQuabeh, Hilal and Nakov, Preslav and Liang, Shangsong},
  booktitle={Findings of the Association for Computational Linguistics: EMNLP 2024},
  pages={12217--12231},
  year={2024},
}

@inproceedings{wessel2023mbib,
  title={Introducing MBIB -- The First Media Bias Identification Benchmark Task and Dataset Collection},
  author={Wessel, Martin and Horych, Tomáš and Ruas, Terry and Aizawa, Akiko and Gipp, Bela and Spinde, Timo},
  booktitle={Proceedings of the 46th International ACM SIGIR Conference on Research and Development in Information Retrieval},
  year={2023},
}

@article{spinde2023taxonomy,
  title={The Media Bias Taxonomy: A Systematic Literature Review on the Forms and Automated Detection of Media Bias},
  author={Spinde, Timo and Hinterreiter, Smi and Haak, Fabian and Ruas, Terry and Giese, Helge and Meuschke, Norman and Gipp, Bela},
  journal={arXiv preprint arXiv:2312.16148},
  year={2023},
}

@inproceedings{spinde2022babe,
  title={Neural Media Bias Detection Using Distant Supervision With BABE -- Bias Annotations By Experts},
  author={Spinde, Timo and Plank, Manuel and Krieger, Jan-David and Ruas, Terry and Gipp, Bela and Aizawa, Akiko},
  booktitle={Findings of the Association for Computational Linguistics: EMNLP 2021},
  year={2021},
}

@inproceedings{krieger2022daroberta,
  title={A Domain-adaptive Pre-training Approach for Language Bias Detection in News},
  author={Krieger, Jan-David and Spinde, Timo},
  booktitle={Proceedings of the 14th ACM Web Science Conference},
  year={2022},
}

@inproceedings{bang2024measuring,
  title={Measuring Political Bias in Large Language Models: What Is Said and How It Is Said},
  author={Bang, Yejin and Chen, Delong and Lee, Nayeon and Fung, Pascale},
  booktitle={Proceedings of the 62nd Annual Meeting of the Association for Computational Linguistics (Volume 1: Long Papers)},
  pages={11142--11159},
  year={2024},
}

@inproceedings{wang2025mediabias,
  title={Media Bias Detector: Designing and Implementing a Tool for Real-Time Selection and Framing Bias Analysis in News Coverage},
  author={Wang, Jenny S. and Haider, Samar and Tohidi, Amir and Gupta, Anushkaa and Zhang, Yuxuan and Callison-Burch, Chris and Rothschild, David and Watts, Duncan J.},
  booktitle={Proceedings of the 2025 CHI Conference on Human Factors in Computing Systems},
  year={2025},
}

@inproceedings{menzner2024experiments,
  title={Experiments in News Bias Detection with Pre-Trained Neural Transformers},
  author={Menzner, Tim and Leidner, Jochen L.},
  booktitle={Proceedings of ECIR 2024, Lecture Notes in Computer Science},
  year={2024},
}

@inproceedings{fan2019basil,
  title={In Plain Sight: Media Bias Through the Lens of Factual Reporting},
  author={Fan, Lisa and White, Marshall and Sharma, Eva and Su, Ruisi and Zhu, Joey Jiaxu and Jiang, Mingshi and Dernoncourt, Franck and Huang, Ruihong},
  booktitle={Proceedings of the 2019 Conference on Empirical Methods in Natural Language Processing},
  year={2019},
}

@article{liu2019roberta,
  title={RoBERTa: A Robustly Optimized BERT Pretraining Approach},
  author={Liu, Yinhan and Ott, Myle and Goyal, Naman and Du, Jingfei and Joshi, Mandar and Chen, Danqi and Levy, Omer and Lewis, Mike and Zettlemoyer, Luke and Stoyanov, Veselin},
  journal={arXiv preprint arXiv:1907.11692},
  year={2019},
}

@inproceedings{devlin2019bert,
  title={BERT: Pre-training of Deep Bidirectional Transformers for Language Understanding},
  author={Devlin, Jacob and Chang, Ming-Wei and Lee, Kenton and Toutanova, Kristina},
  booktitle={Proceedings of the 2019 Conference of the North American Chapter of the Association for Computational Linguistics: Human Language Technologies},
  pages={4171--4186},
  year={2019},
}

@article{vaswani2017attention,
  title={Attention Is All You Need},
  author={Vaswani, Ashish and Shazeer, Noam and Parikh, Anoop and Uszkoreit, Jakob and Jones, Llion and Gomez, Aidan N. and Kaiser, Lukasz and Polosukhin, Illia},
  journal={Advances in Neural Information Processing Systems},
  volume={30},
  year={2017},
}

@inproceedings{zhou2021mkbqa,
  title={Improving Zero-Shot Cross-lingual Transfer for Multilingual Question Answering over Knowledge Graph},
  author={Zhou, Yucheng and Geng, Xiubo and Shen, Tao and Zhang, Wenqiang and Jiang, Daxin},
  booktitle={Proceedings of the 2021 Conference of the North American Chapter of the Association for Computational Linguistics: Human Language Technologies},
  pages={5822--5834},
  year={2021},
}

@inproceedings{zhou2026videogen,
  title={Less Is More: Vision Representation Compression for Efficient Video Generation with Large Language Models},
  author={Zhou, Yucheng and Zhang, Jing and Chen, Gangyi and Shen, Jianbing and Cheng, Yu},
  booktitle={Proceedings of the AAAI Conference on Artificial Intelligence},
  year={2026},
}

@inproceedings{lai2024residual,
  title={Residual-based language models are free boosters for biomedical imaging tasks},
  author={Lai, Zhixin and Wu, Jing and Chen, Suiyao and Zhou, Yucheng and Hovakimyan, Naira},
  booktitle={Proceedings of the IEEE/CVF Conference on Computer Vision and Pattern Recognition},
  pages={5086--5096},
  year={2024}
}

@article{wang2024memorymamba,
  title={Memorymamba: Memory-augmented state space model for defect recognition},
  author={Wang, Qianning and Hu, He and Zhou, Yucheng},
  journal={arXiv preprint arXiv:2405.03673},
  year={2024}
}

@inproceedings{zhou2022claret,
  title={ClarET: Pre-training a Correlation-Aware Context-To-Event Transformer for Event-Centric Generation and Classification},
  author={Zhou, Yucheng and Shen, Tao and Geng, Xiubo and Long, Guodong and Jiang, Daxin},
  booktitle={Proceedings of the 60th Annual Meeting of the Association for Computational Linguistics (Volume 1: Long Papers)},
  pages={2559--2575},
  year={2022}
}

@inproceedings{zhou2022eventbert,
  title={Eventbert: A pre-trained model for event correlation reasoning},
  author={Zhou, Yucheng and Geng, Xiubo and Shen, Tao and Long, Guodong and Jiang, Daxin},
  booktitle={Proceedings of the ACM Web Conference 2022},
  pages={850--859},
  year={2022}
}

@article{zhou2023thread,
  title={Thread of thought unraveling chaotic contexts},
  author={Zhou, Yucheng and Geng, Xiubo and Shen, Tao and Tao, Chongyang and Long, Guodong and Lou, Jian-Guang and Shen, Jianbing},
  journal={arXiv preprint arXiv:2311.08734},
  year={2023}
}

@article{zhou2024rethinking,   
    title={Rethinking Visual Dependency in Long-Context Reasoning for Large Vision-Language Models},   
    author={Zhou, Yucheng and Rao, Zhi and Wan, Jun and Shen, Jianbing},   
    journal={arXiv preprint arXiv:2410.19732},   
    year={2024} 
}

@inproceedings{zhou2024visual,
  author       = {Yucheng Zhou and
                  Xiang Li and
                  Qianning Wang and
                  Jianbing Shen},
  title        = {Visual In-Context Learning for Large Vision-Language Models},
  booktitle    = {Findings of the Association for Computational Linguistics, {ACL} 2024,
                  Bangkok, Thailand and virtual meeting, August 11-16, 2024},
  pages        = {15890--15902},
  publisher    = {Association for Computational Linguistics},
  year         = {2024},
}

@inproceedings{zhou2025improving,
  title={Improving medical large vision-language models with abnormal-aware feedback},
  author={Zhou, Yucheng and Song, Lingran and Shen, Jianbing},
  booktitle={Proceedings of the 63rd Annual Meeting of the Association for Computational Linguistics (Volume 1: Long Papers)},
  pages={12994--13011},
  year={2025}
}

@inproceedings{zhou2025weak,
  title={Weak to strong generalization for large language models with multi-capabilities},
  author={Zhou, Yucheng and Shen, Jianbing and Cheng, Yu},
  booktitle={The Thirteenth International Conference on Learning Representations},
  year={2025}
}

@inproceedings{zhoucondition,
  title={Condition Errors Refinement in Autoregressive Image Generation with Diffusion Loss},
  author={Zhou, Yucheng and Li, Hao and Shen, Jianbing},
  booktitle={The Fourteenth International Conference on Learning Representations},
  year={2026}
}
\bibliographystyle{iclr2025_conference}

\end{document}